\newcommand{\method}{Consistency-FM\xspace}
\newtheorem{lemma}{Lemma}
\newtheorem{theorem}{Theorem}
\newtheorem{corollary}{Corollary}[theorem]
\title{Consistency Flow Matching: \\Defining Straight Flows with Velocity Consistency}
\author{%
  Ling Yang\thanks{Corresponding author, yangling0818@163.com} \\
  Peking University\\
  \And
  Zixiang Zhang \\
  Peking University \\
  \AND
  Zhilong Zhang \\
  Peking University \\
  \And
  Xingchao Liu \\
  University of Texas at Austin \\
  \And
  Minkai Xu \\
  Stanford University \\
  \And
  Wentao Zhang \\
  Peking University \\
    \And
  Chenlin Meng \\
  Pika Labs \\
    \And
  Stefano Ermon \\
  Stanford University \\
  \And
  Bin Cui \\
  Peking University \\
}
\begin{document}

\maketitle

\begin{abstract}
Flow matching (FM) is a general framework for defining probability paths via Ordinary Differential Equations (ODEs) to transform between noise and data samples. Recent approaches attempt to straighten these flow trajectories to generate high-quality samples with fewer function evaluations, typically through iterative rectification methods or optimal transport solutions. In this paper, we introduce Consistency Flow Matching (\method), a novel FM method that explicitly enforces self-consistency in the velocity field. \method directly defines straight flows starting from different times to the same endpoint, imposing constraints on their velocity values. Additionally, we propose a multi-segment training approach for \method to enhance expressiveness, achieving a better trade-off between sampling quality and speed. Experiments demonstrate that our \method significantly improves training efficiency by converging 4.4x faster than consistency models and 1.7x faster than rectified flow models while achieving better generation quality. Our code is available at \href{https://github.com/YangLing0818/consistency_flow_matching}{https://github.com/YangLing0818/consistency\_flow\_matching}
\end{abstract}

\section{Introduction}

In recent years, deep generative models have provide an attractive family of paradigms that can produce high-quality samples by modeling a data distribution, achieving promising results in many generative scenarios, such as image generation \citep{ho2020denoising,yang2023diffusion,yang2024mastering,yang2024cross}. As a general and deterministic framework, Continuous Normalizing
Flows (CNFs) \citep{chen2018neural} are capable of modeling arbitrary probability paths, specifically including the probability paths represented by diffusion processes \citep{song2021maximum}. To scale up the training of CNFs, many works propose efficient simulation-free approaches \citep{lipman2022flow,albergo2022building,liu2022flow} by parameterizing
a vector field which flows from noise samples to data samples. Lipman et al. (2022) \cite{lipman2022flow} proposes Flow Matching
(FM) to train CNFs based on constructing explicit
conditional probability paths between the noise distribution and each data sample. Taking inspiration from denoising score matching \cite{song2019generative}, FM further shows that a per-example training objective can provide equivalent gradients without requiring explicit knowledge of the intractable target vector field, thus incorporating existing diffusion paths as special instances.

Straightness is one particularly-desired property of the trajectory induced by FM \citep{liu2022flow,liu2023instaflow,kornilov2024optimal,tong2023improving}, because the straight path are not only the shortest path between two end points, but also can be exactly simulated without time discretization.
To learn straight line paths which transport distribution $\pi_0$ to $\pi_1$, Liu et al. (2022) \cite{liu2022flow} learn a rectified flow from data by turning an arbitrary coupling of $\pi_0$ and $\pi_1$ to a
new deterministic coupling, and iteratively train new rectified flows with the data simulated from the previously obtained rectified flow. Some works resort to optimizing with an optimal transport plan by considering non-independent couplings of k-sample empirical distributions \citep{pooladian2023multisample,tong2023improving}. For example, OT-CFM \citep{tong2023improving} attempts to approximate dynamic OT, creating
simpler flows that are more stable to train and lead to faster inference.

\begin{figure}[t]
	\vskip -0.6in
	\centering
	\includegraphics[width=0.98\linewidth]{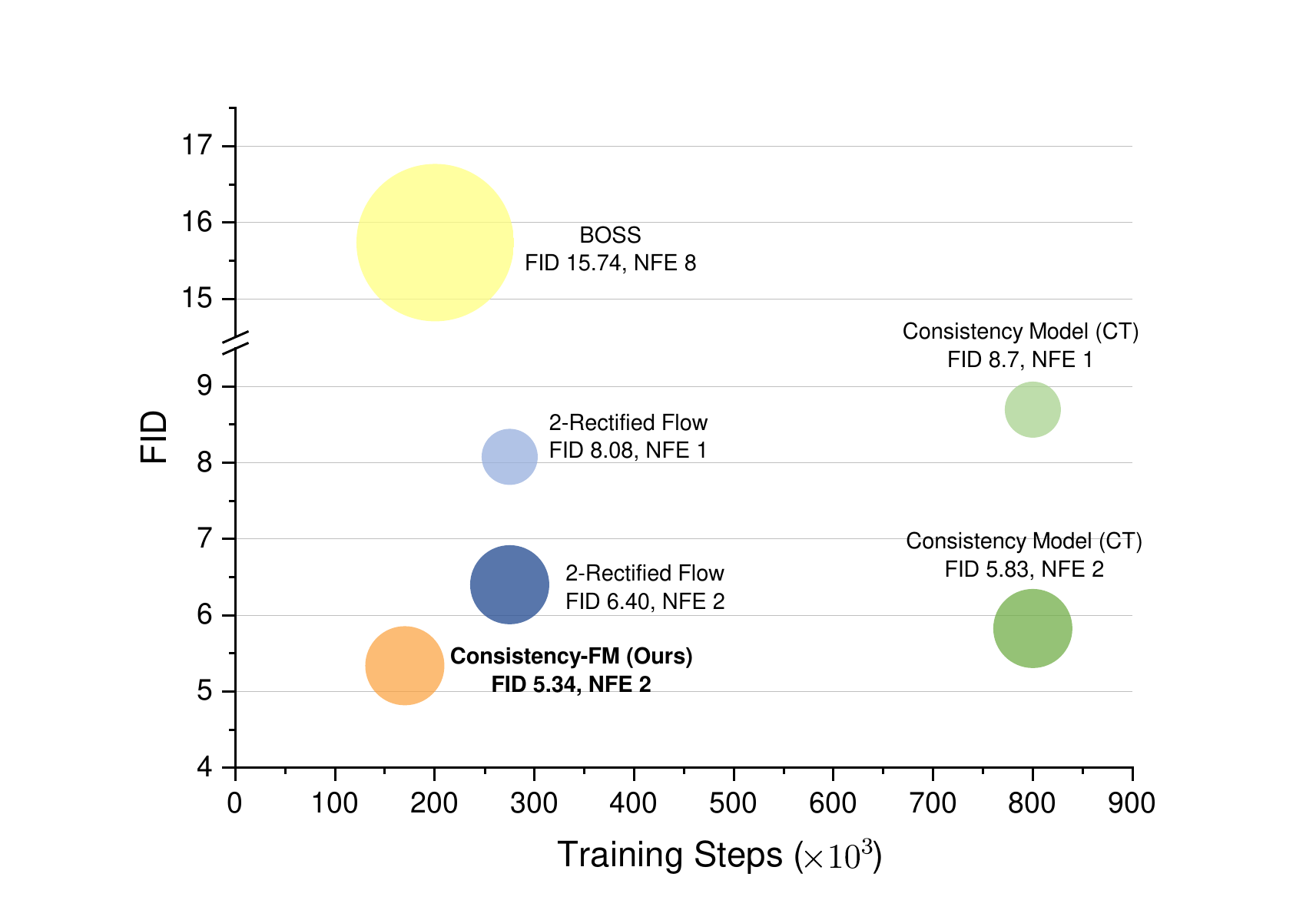}
	\caption{Comparison on CIFAR-10 dataset regarding the trade-off between generation quality and training efficiency. Our \method demonstrates the best trade-off compared to consistency models \citep{song2023consistency} and rectified flow models \citep{liu2022flow,nguyen2024bellman}, \textbf{converging 4.4 times faster than consistency models and 1.7 times faster than rectified flow models} while achieving better generation quality}
	\label{fig:efficiency_comparison}
\end{figure}

However, despite their impressive generation quality, they still lack an effective trade-off between sampling quality and computational cost in straightening flows. To be more specifically, iterative rectification would suffer from accumulation error, and approximating an optimal transport plan in each training batch is computationally expensive. Therefore, a question naturally arises, \textit{can one learn an effective ODE model that maximally straightens the trajectories of probability flows without increasing training complexity?}

In this work, we propose a new fundamental FM method, namely \textbf{Consistency Flow Matching (\method)}, to straighten the flows by  explicitly enforcing self-consistency property in the velocity field. More specifically, \method directly defines straight flows that start from different times to the same endpoint, and further constrains on their velocity values. To enhance the model expressiveness and enable better transporting between complex distributions, we resort to training \method in a multi-segment approach, which constructs a piece-wise linear trajectory. Moreover, this flexible time-to-time jump allows \method to perform distillation on pre-trained FM models for better trade-off between sampling speed and quality.  

\begin{figure}[t]
  \centering
  \includegraphics[width=0.98\linewidth]{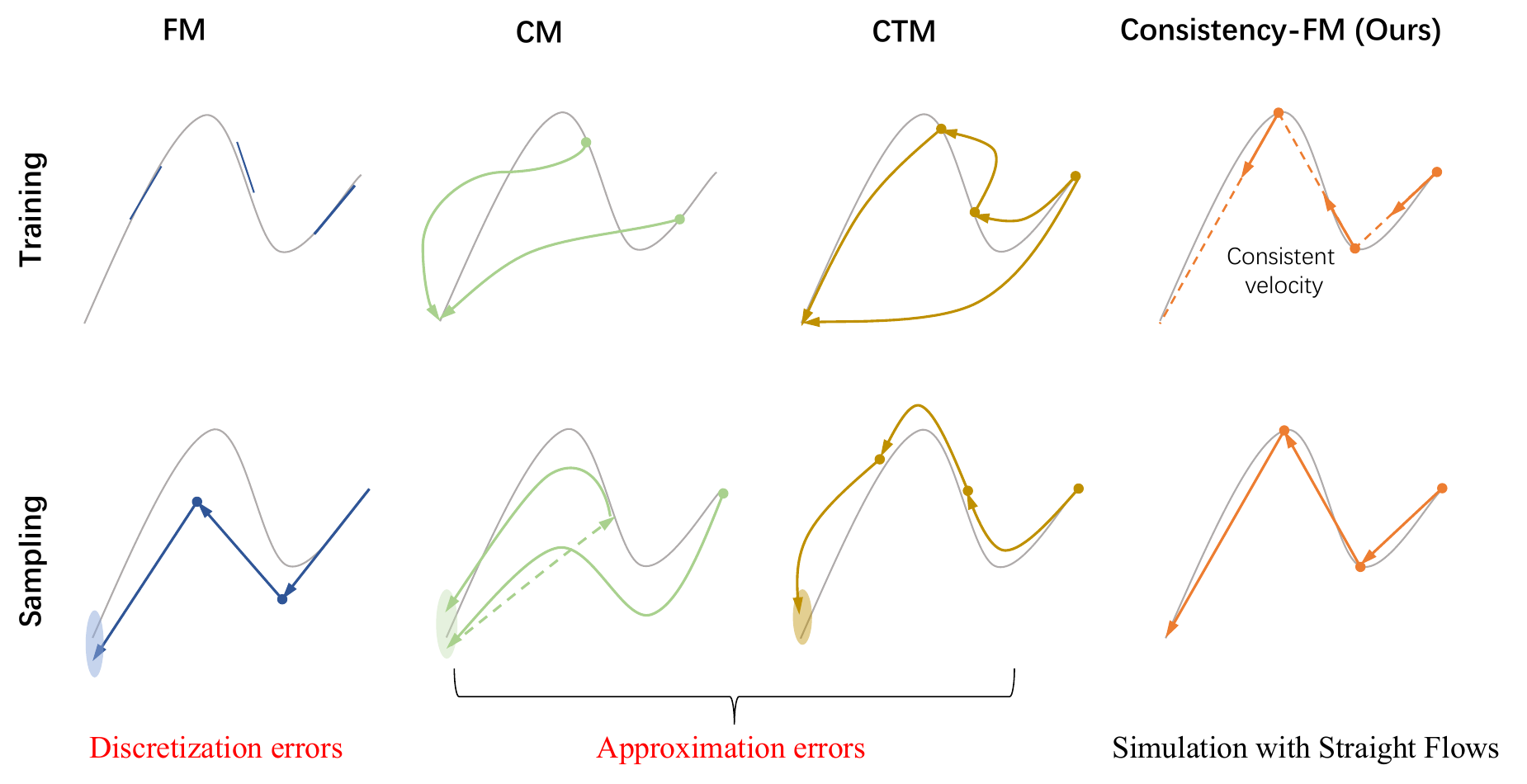}
  \caption{Training and sampling comparisons between flow matching (FM) \citep{lipman2022flow}, consistency model (CM) \citep{song2023consistency} and consistency trajectory model (CTM) \citep{kim2023consistency} and our \method. While previous methods can cause discretization errors or approximation errors, \method mitigates these issues by defining straight flows in simulation.}
  \label{fig:first_fig}
\end{figure}

\paragraph{Comparison with Consistency Models} Consistency Models (CMs) \citep{song2023consistency} 
learn a set of consistency functions that directly map noise to data. 
While CMs can generate sample with one NFE, but they fail to provide a satisfying trade-off between generation quality and computational cost \citep{kim2023consistency}. 
Moreover, enforcing consistency property at arbitrary points is redundant and potentially slows down the training process. 
In contrast, 
our \method enforces the consistency property over the space of velocity field instead of sample space, which 
can be viewed as a high-level regularization for straightening ODE trajectory. While CMs is able to learn consistency functions in a general form, \method parameterizes the consistency functions as straight flows, which enables faster training convergence without the need for approximating the entire probability path.

\paragraph{Main Contributions} We summarize our contributions as follows: (i) We propose a new fundamental class of FM models that explicitly enforces the self-consistency in the space of velocity field instead of sample space. (ii) We conduct sufficient theoretical analysis for our proposed \method, and enhance its expressiveness with multi-segment optimization.  (iii) Preliminary experiments on three classical image datasets demonstrate the superior generation quality and training efficiency of our \method (e.g., 4.4 times and 1.7 times faster than consistency model and rectified flow).

\section{Related Work and Discussions}
\paragraph{Flow Matching for Generative Modeling}
Flow Matching (FM) aims to (implicitly) learn a vector field $\{v_t\}_{t\in[0,1]}$, which generates an ODE that admits to the desired probability path $\{p_t\}_{t\in[0,1]}$ \citep{lipman2022flow}. The training of FM does not require any computational challenging simulation, as it directly estimate the vector field using a regression objective which can be efficiently estimated \citep{lipman2022flow}. By the construction of FM, it allows general trajectory of ODE and probability path, thus many effort have been dedicated to design better trajectory with certain properties\citep{pooladian2023multisample,tong2023improving,klein2023equivariant,stark2024dirichlet,campbell2024generative}. One particularly desired property is the straightness of the trajectory, as a straight trajectory can be efficiently simulated with few steps of Euler integration. Concurrent works Multisample FM \citep{pooladian2023multisample} and Minibatch OT \citep{tong2023improving} propose to generalize the independent coupling of data distribution $p_0(x_0)$ and prior distribution $p_1(x_1)$ to optimal transport coupling plan $\pi(x_0,x_1)$. Under the optimal transport plan, the learned trajectory of ODE will tend to be straight. However, their methods require constructing the approximated optimal transport plan in each training batch, which is computationally prohibitive. 

Rectified Flow \citep{liu2022flow,liu2023instaflow} can be viewed as a FM with specific trajectory. Rectified Flow proposes to rewire and straighten the trajectory by iterative distillation, which requires multiple round of training and may suffer from accumulation error.  A recent work, Optimal FM \citep{kornilov2024optimal} proposes to directly learn the optimal transport map from $p_1$ to $p_0$ and use it to calculate the vector field and straight trajectory. However, computing the optimal transport map in high dimension is a challenging task \citep{makkuva2020optimal}, and Optimal FM \citep{kornilov2024optimal} only provides experiments on toy datasets. In this paper, we propose to straighten the trajectory in a more flexible and effective approach by enforcing the self-consistency property in the velocity field. 

\paragraph{Learning Efficient Generative Models}
GANs \citep{arjovsky2017wasserstein,goodfellow2014generative}, VAEs \citep{kingma2013auto}, Diffusion Models \citep{song2020score,song2020denoising,ho2020denoising} and Normalizing Flows \citep{rezende2015variational,dinh2016density} have been four classical deep generative models. Among them, GANs and VAEs are efficient one-step models. However, GANs usually suffer from the training
instability and mode collapse issues, and VAEs may struggle to generate high-quality examples. Therefore, recent works begin to utilize diffusion models and continuous normalizing flows \citep{chen2018neural} for better training stability and high-fidelity generation, which are based on a sequence of expressive transformations for generative sampling. 

To achieve a better trade-off between sampling quality and speed, many efforts have been made to accelerate diffusion models, either by modifying the diffusion process \citep{song2020denoising,bao2021analytic,dockhorn2021score,xiao2021tackling,yang2024cross}, with an efficient ODE solver \citep{ludpm,dockhorn2022genie,zheng2023fast}, or performing distillation between pre-trained diffusion models and their more efficient versions (e.g., with less sampling steps) \citep{salimans2022progressive,liu2022flow,luo2024diff,luo2023comprehensive}. However, most distillation methods require multiple training rounds and are susceptible to accumulation errors. Recent Consistency Models \citep{song2023consistency,song2023improved} distill the entire sampling process of diffusion model into one-step generation, while maintaining good sample quality. Consistency Trajectory Models (CTMs)\citep{kim2023consistency} further mitigate the issues about the accumulated errors in multi-step sampling. However, these method must learn to integrate the full ODE integral, which are difficult to learn when it jumps between modes of the target distribution.
In this paper, we propose a new concept of \textit{velocity consistency} with defined straight probability flows, achieving most competitive results on both one- and multi-step generation. 



\section{Consistency Flow Matching}
\subsection{Preliminaries on Flow Matching}
Let $\mathcal{R}^d$ denote the data space with data point $x_0 \in \mathcal{R}^d $, FMs aim to the learn a vector field $v_\theta(t,x): [0,1]\times\mathcal{R}^d \xrightarrow{} \mathcal{R}^d$, such that the solution of the following ODE can transport noise $x_0~\sim p_0$ to data $x_1~\sim p_1$: 
\begin{equation}\label{ODE-1}
    \left\{
        \begin{aligned}
            &\frac{d \gamma_x(t)}{dt} = v_\theta(t,\gamma_x(t)), \\
            &\gamma_x(0)= x 
        \end{aligned}
    \right.
\end{equation}
The solution of \cref{ODE-1} is denoted by $\gamma_x(\cdot)$, which is also called a flow, describing the trajectory of the ODE from starting point $x$. Given the ground truth vector field $u(t,x)$ that generates probability path $p_t$ under the two marginal constraints that $p_{t=0} = p_0$ and $p_{t=1} = p_1$, FMs seek to optimize the simple regression objective
\begin{equation} \label{objective-1}
    E_{t,p_t} ||v_\theta(t,x_t)-u(t,x_t)||_2^2
\end{equation}
However, it is computational intractable to find such $u$, since $u$ and $p_t$ are governed by the following continuity equation \citep{villani2009optimal}:
\begin{equation} \label{continuity}
    \partial_tp_t(x) = - \nabla \cdot (u(t,x)p_t(x))
\end{equation}
Instead of directly optimizing \cref{objective-1}, Conditional Flow Matching \citep{lipman2022flow} regress $v_\theta(t,x)$ on the conditional vector filed $u(t,x_t|x_1)$ and probability path $p_t(x_t|x_1)$ :
\begin{equation} \label{objective-2}
    E_{t,q(x_1)}E_{p_t(x_t|x_1)} ||v_\theta(t,x_t)-u(t,x_t|x_1)||_2^2
\end{equation}
Two objectives \cref{objective-1} and \cref{objective-2} share the same gradient with respect to $\theta$, while \cref{objective-2} can be efficiently estimated as long as the conditional pair $u(t,x_t|x_1), p_t(x_t|x_1)$ is tractable. Note that recovering the marginal vector field and probability path from the conditioned one remains a complex challenge \citep{lipman2022flow}.

\subsection{Defining Straight Flows with Consistent Velocity}
\paragraph{Motivation} Recent FM methods for learning straight flows typically necessitate the approximation the probability path $p_t$ and its marginal distributions $p_0$ and $p_1$   \citep{liu2022flow,liu2023instaflow,pooladian2023multisample,lee2023minimizing} , which are computational intensive and introduce additional approximation error. To address these challenges, we introduce \method, a general method to efficiently learn straight flows without the need for approximating the entire probability path.

A straightforward approach to learn straight flows is to identify a consistent ground truth vector field and then use objective in \cref{objective-1} for training. The definition of consistent velocity is $v(t,\gamma_x(t)) =v(0,x)$, indicating the velocity along the solution of \cref{ODE-1} remains constant. However, due to the intractability of original data distribution, it is also intractable to find such a vector field, or to design a conditional vector field such that the corresponding marginal velocity is consistent \citep{lipman2022flow,pooladian2023multisample}.

\begin{figure}[t]
  \centering
  \includegraphics[width=0.9\linewidth]{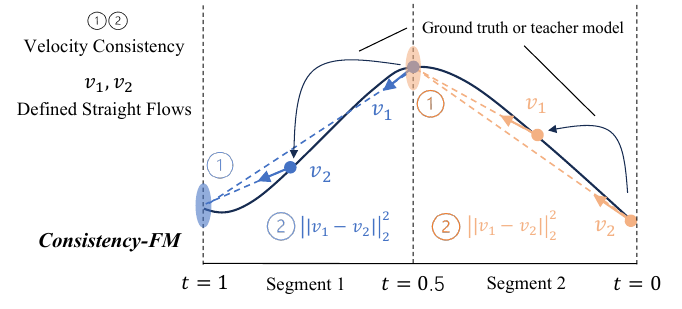}
  \caption{Ilustration of training our consistency-FM.}
  \label{fig:method}
\end{figure}

Instead of directly regressing on the ground truth vector field, \method directly defines straight flows with consistent velocity that start from different times to the same endpoint. Specifically, we have the following lemma (prove in Appendix \ref{pf_lemma1}):
\begin{lemma} \label{lemma1}
    Assuming the vector field is Lipschitz with respect to $x$ and uniform in $t$, and are differentiable in both input, then these two conditions are equivalent:
    \begin{equation}
        \begin{aligned}
            &\textit{Condition 1.} \quad v(t,\gamma_x(t)) =v(s,\gamma_x(s)),\quad \forall t,s\in[0,1]\\
            &\textit{Condition 2.} \quad \gamma_{x}(t)+(1-t)*v(t,\gamma_x(t))= \gamma_{x}(s)+(1-s)*v(s,\gamma_x(s)), \quad \forall t,s \in [0,1], 
        \end{aligned}
    \end{equation}
\end{lemma}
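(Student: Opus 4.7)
}
The plan is to reduce the equivalence to a statement about two scalar-indexed quantities being constant in $t$. Define
\[
\phi(t) := v(t,\gamma_x(t)), \qquad \psi(t) := \gamma_x(t) + (1-t)\,\phi(t).
\]
Then Condition 1 is equivalent to $\phi$ being constant on $[0,1]$, and Condition 2 is equivalent to $\psi$ being constant on $[0,1]$. Under the regularity hypotheses (Lipschitz in $x$, uniform in $t$, and differentiability in both inputs), the flow $\gamma_x$ exists, is unique, and is $C^1$, so $\phi$ and $\psi$ are $C^1$ curves in $\mathbb{R}^d$. In particular $\dot{\gamma}_x(t)=v(t,\gamma_x(t))=\phi(t)$ by the ODE \cref{ODE-1}.

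The forward direction $(1\Rightarrow 2)$ is immediate: if $\phi\equiv v_0$ is constant, then
\[
\dot{\psi}(t) = \dot{\gamma}_x(t) - \phi(t) + (1-t)\dot{\phi}(t) = v_0 - v_0 + 0 = 0,
\]
so $\psi$ is constant. For $(2\Rightarrow 1)$ I would differentiate $\psi$ directly: from $\dot{\psi}\equiv 0$ and $\dot{\gamma}_x = \phi$ the same computation yields
\[
0 = \phi(t) - \phi(t) + (1-t)\dot{\phi}(t) = (1-t)\dot{\phi}(t),
\]
whence $\dot{\phi}(t)=0$ for every $t\in[0,1)$, and hence $\phi$ is constant on $[0,1)$.

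The one point that deserves a moment of care, and which I expect to be the only subtlety, is the endpoint $t=1$: the factor $(1-t)$ vanishes there, so we cannot conclude $\dot{\phi}(1)=0$ directly from the identity above. This is handled by continuity. Since $\phi$ is $C^1$ on the closed interval $[0,1]$ (by the smoothness assumptions combined with $\gamma_x\in C^1$), the value $\phi(1)=\lim_{t\to 1^-}\phi(t)$ must agree with the common constant value on $[0,1)$. Thus $\phi$ is constant on all of $[0,1]$, which is Condition 1. Combining the two directions establishes the equivalence.
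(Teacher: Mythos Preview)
Your argument is correct and is essentially the same as the paper's: both directions amount to differentiating the identity $\psi(t)=\gamma_x(t)+(1-t)\phi(t)$ and using $\dot\gamma_x=\phi$ to obtain $\dot\psi=(1-t)\dot\phi$, from which constancy of one of $\phi,\psi$ yields constancy of the other. If anything, your treatment is slightly tidier than the paper's, since you introduce $\phi,\psi$ explicitly and you address the endpoint $t=1$ by continuity, a detail the paper glosses over.
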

where $\gamma_{x}(t)$ represents the solution of \cref{ODE-1} at time $t$. \textit{Condition 2} specifies that starting from an arbitrary time $t$ with data point $ \gamma_{x}(t)$, and moving in the direction of current velocity for a duration of $1-t$, the resulting data will be consistent and independent with respect to $t$. 

\paragraph{Velocity Consistency Loss} While \textit{Condition 1} directly constraints the vector field to be consistent, learning vector fields that only satisfy \textit{Condition 1} may lead to trivial solutions.  On the other hand, \textit{Condition 2} ensures the consistency of the vector field from a trajectory viewpoint, offering a way to directly define straight flows. Motivated by this, \method learns a consistency vector field to satisfy both conditions: 
\begin{equation}\label{loss-1}
    \begin{aligned}
        & \mathcal{L}_{\theta} = E_{t\sim \mathcal{U}}E_{x_t,x_{t+\Delta t}} || f_\theta(t,x_{t}) - f_{\theta^-}(t+\Delta t,x_{t+\Delta t})||_2^2+ \alpha|| v_\theta(t,x_{t}) - v_{\theta^-}(t+\Delta t,x_{t+\Delta t})||_2^2, \\
        & f_\theta(t,x_t) = x_t + (1-t)*v_\theta(t,x_t), \\
    \end{aligned}
\end{equation}
where $\mathcal{U}$ is the uniform distribution on $[0,1-\Delta t]$, $\alpha$ is a positive scalar, $\Delta t$ denotes a time interval which is a small and positive scalar. $\theta^-$ denotes the running average of past values of $\theta$ using  exponential moving average (EMA), $x_t$ and $x_{t+\Delta t}$ follows a pre-defined distribution which can be efficiently sampled, for example, VP-SDE \citep{ho2020denoising} or OT path \citep{lipman2022flow}. Note that by setting $t=1$, \textit{Condition 2} implies that $\gamma_{x}(t)+(1-t)*v(t,\gamma_x(t)) = \gamma_{x}(1) \sim p_1$, and thus training with $\mathcal{L}_\theta$ can not only regularize the velocity but also learn the data distribution. Furthermore, if \textit{Condition 2} is met, then  the straight flows $\gamma_{x}(t)+(1-t)*v(t,\gamma_x(t))$ can directly predict $x_1$ from each time point $t$ \citep{song2023consistency}. 

Below we provide a theoretical justification for the objective based on asymptotic analysis (proof in Appendix \ref{pf_theorem1}).
\begin{theorem}\label{theorem-1}
    Consider no exponential moving average, i.e., $\theta^- = \theta$. Assume there exists ground truth velocity field $u_t$ that generates $p_t$ and satisfies the continuity  \cref{continuity}. Furthermore we assume $v_\theta$ is bounded and twice continuously differentiable with bounded first and second derivatives, the ground truth velocity $u_t$ is bounded. Then we have:
    \begin{equation}\label{connectFM}
        E|| f_\theta(t,x_{t}) - f_{\theta}(t+\Delta t,x_{t+\Delta t})||_2^2 = (\Delta t)^2 E||v_\theta(t,x_t)-u(t,x_t)-(1-t)(\partial_t v_\theta + u \cdot \nabla_x v_\theta)||_2^2 + o((\Delta t)^2)
    \end{equation}
\end{theorem}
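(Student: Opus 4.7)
The plan is to compute a first-order Taylor expansion of $f_\theta(t+\Delta t, x_{t+\Delta t})$ about $(t, x_t)$, identify the $\Delta t$ coefficient, and show that every remaining term contributes only $o((\Delta t)^2)$ after squaring and integrating. First, I would use the continuity equation \cref{continuity} to couple $(x_t, x_{t+\Delta t})$ through the flow of $u$: since $u$ generates $p_t$ and is bounded, standard ODE estimates yield
\begin{equation*}
x_{t+\Delta t} = x_t + u(t,x_t)\,\Delta t + r_1, \qquad \|r_1\| = O((\Delta t)^2),
\end{equation*}
uniformly in $x_t$. Using the assumed bounds on the first and second derivatives of $v_\theta$, I would then Taylor-expand
\begin{equation*}
v_\theta(t+\Delta t, x_{t+\Delta t}) = v_\theta(t,x_t) + \bigl(\partial_t v_\theta + u\cdot\nabla_x v_\theta\bigr)(t,x_t)\,\Delta t + r_2, \qquad \|r_2\| = O((\Delta t)^2).
\end{equation*}

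Next I would substitute into $f_\theta(t+\Delta t, x_{t+\Delta t}) = x_{t+\Delta t} + (1-t-\Delta t)\,v_\theta(t+\Delta t, x_{t+\Delta t})$ and collect powers of $\Delta t$. The $\Delta t^0$ part reproduces $f_\theta(t,x_t)$ exactly; all other order-one contributions combine (the $-\Delta t\,v_\theta$ coming from expanding $1-t-\Delta t$, together with the $u\,\Delta t$ from the increment and the $(1-t)(\partial_t v_\theta + u\cdot\nabla_x v_\theta)\Delta t$ from the chain rule on $v_\theta$) into
\begin{equation*}
A(t,x_t) := u - v_\theta + (1-t)\bigl(\partial_t v_\theta + u\cdot\nabla_x v_\theta\bigr),
\end{equation*}
so that $f_\theta(t+\Delta t, x_{t+\Delta t}) - f_\theta(t,x_t) = A\,\Delta t + R$ with $\|R\| = O((\Delta t)^2)$ uniformly in $x_t$. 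Expanding the square,
\begin{equation*}
\|A\,\Delta t + R\|_2^2 = (\Delta t)^2 \|A\|_2^2 + 2\,\Delta t\, A^\top R + \|R\|_2^2,
\end{equation*}
and bounding the last two terms by Cauchy--Schwarz together with the uniform boundedness of $A$ and $R$, I get $O((\Delta t)^3)$ and $O((\Delta t)^4)$ contributions, both $o((\Delta t)^2)$. Taking expectations and using $\|A\|_2^2 = \|v_\theta - u - (1-t)(\partial_t v_\theta + u\cdot\nabla_x v_\theta)\|_2^2$ yields the claimed identity.

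The main obstacle is Step~1: the identity as stated is cleanest when $(x_t, x_{t+\Delta t})$ are coupled through the marginal flow of $u$, so that the increment $x_{t+\Delta t}-x_t$ equals $u(t,x_t)\Delta t$ deterministically up to $O((\Delta t)^2)$. Under a generic FM-style coupling (for instance $x_\tau = (1-\tau)x_0 + \tau x_1$ with endpoints fixed) the increment instead equals $(x_1-x_0)\Delta t$, which has mean $u\,\Delta t$ but non-vanishing conditional variance; propagating this randomness through the expansion adds an extra $(\Delta t)^2\,E\|(I + (1-t)\nabla_x v_\theta)(W-u)\|_2^2$ that is not present on the right-hand side of \cref{connectFM}. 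I would therefore explicitly adopt the marginal-flow coupling from the continuity equation as the meaning of the joint distribution in the expectation; once that is in place, the rest is routine Taylor-plus-uniform-bound bookkeeping.
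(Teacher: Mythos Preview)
Your proposal is correct and follows essentially the same route as the paper: expand $f_\theta(t+\Delta t,x_{t+\Delta t})$ to first order in $\Delta t$ along the marginal flow of $u$, identify the $\Delta t$ coefficient, and absorb the remainder into $o((\Delta t)^2)$ after squaring (the paper writes the increment $x_{t+\Delta t}-x_t$ via the mean value theorem for $\int_t^{t+\Delta t}u(s,x_s)\,ds$ rather than your direct Taylor expansion of the flow, but this is cosmetic). Your observation that the identity requires the marginal-flow coupling of $(x_t,x_{t+\Delta t})$, rather than the conditional linear-interpolation coupling used in training, is exactly the implicit hypothesis in the paper's argument and is well spotted.
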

\textbf{Remark 1.} The objective in \cref{connectFM},
\begin{equation*}
    E||v_\theta(t,x_t)-u(t,x_t)-(1-t)(\partial_t v_\theta + u \cdot \nabla_x v_\theta)||_2^2,
\end{equation*}
can be seen as striking a balance between exact velocity estimation and adhering to consistent velocity constraints. On the one hand, the objective aims to minimize the discrepancy between learned and ground truth velocity $v_\theta(t,x_t)-u(t,x_t)$, aligning with the goal of FM-based methods \citep{lipman2022flow}. On the other hand, it also considers the consistency of the velocity. By Lemma \ref{lemma2} in the Appendix, $\partial_t v_\theta + u \cdot \nabla_x v_\theta$ serves as a constraint for velocity consistency, which measures the changes of the velocity after taking a infinitesimal step along the direction of ground truth velocity. Given the ground truth velocity may not be consistent, this objective provides a trade-off between the sampling quality and computational cost with straight flow.

\subsection{Multi-Segment \method}
To enhance the expressiveness of \method for transporting distributions in general probability path, we introduce Multi-Segment \method. This approach relaxes the requirement for consistent velocity throughout the flow, allowing for more flexible adaptations to diverse distribution characteristics. Multi-Segment \method divides the time interval into equal segments, learning a consistent vector field $v_\theta^i$ within each segment. After recombining these segments, it constructs a piece-wise linear trajectory to transport noise to data distribution. Specifically, given a segment number $K$, the time interval $[0,1]$ is divided with $[0,1] = \Sigma_{i=0}^{K-1}[i/K,(i+1)/K]$. Then the training objective is defined as
\begin{equation}\label{loss-2}
    \begin{aligned}
        &\mathcal{L}_{\theta} = E_{t\sim \mathcal{U}^i}\lambda^iE_{x_t,x_{t+\Delta t}} || f_\theta^i(t,x_t) - f_{\theta^-}^i(t+\Delta t,x_{t+\Delta t})||_2^2+ \alpha|| v_\theta^i(t,x_t) - v_{\theta^-}^i(t+\Delta t,x_{t+\Delta t})||_2^2, \\
        &f_\theta^i(t,x_t) = x_t + ((i+1)/k-t)*v_\theta^i(t,x_t),\\
    \end{aligned}
\end{equation}
where $i$ denotes the $i^{th}$ segment,  $\mathcal{U}^i$ is the uniform distribution on $[i/K,(i+1)/K-\Delta t]$, $x_{t},x_{t+\Delta t}$ follow a pre-defined distribution, $\Delta t$ is a small and positive constant . $v_\theta^i(t,x_{t})$ are the flow and the consistent vector field in segment $i$, respectively. $\lambda^i$ is a positive weighting scalar for different segment, as vector field in the middle of $[0,1]$ is more difficult to train \citep{esser2024scaling}. 

Below we provide a theoretical justification for multi-segment training.
First, we analysis the optimal solution for objective \cref{loss-2} and provide a explicit formula for the estimation error in Multi-Segment Consistency-FM (proof in Appendix \cref{pf_theorem2}): 
\begin{theorem}\label{theorem 2}
    Consider training consistency-FM on segment $i$ which is defined in time interval $[S,T]$. Assume there exists ground truth velocity field $u_t$ that generates $p_t$ and satisfies the continuity equation \cref{continuity}, let $v^*(t,x)$ denote the oracle consistent velocity such that 
    \begin{equation}
        x_T = x_t + (T-t)v^*(t,x_t).
    \end{equation}
    Then the learned $v_\theta^i(t,x_t)$ that minimize \cref{loss-2} in segment $i$ at time $t \in [S,T - \Delta t]$ has the following error:
    \begin{equation}\label{error-1}
    \begin{aligned}
        v_\theta^i(t,x_t) - v^*(t,x_t) =& \frac{\alpha}{(T-t)^2+\alpha}(v^*(t+\Delta t, x_{t+\Delta t})-v^*(t, x_{t})) \\
        & + \frac{(T-t-\Delta t)(T-t) +\alpha}{(T-t)^2+\alpha}(v_\theta^i(t+\Delta t,x_{t+\Delta t}) - v^*(t+\Delta t,x_{t+\Delta t}))
    \end{aligned}
    \end{equation}
\end{theorem}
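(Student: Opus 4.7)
The plan is to characterize the minimizer of the Multi-Segment Consistency-FM loss pointwise at $(t, x_t)$ by first-order optimality, then substitute the oracle identity $x_T = x_t + (T-t) v^*(t,x_t)$ to rewrite the displacement $x_{t+\Delta t} - x_t$ in terms of $v^*$, and finally rearrange to isolate the error $v_\theta^i(t,x_t) - v^*(t,x_t)$.

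First I would fix a segment $i$ with interval $[S,T]$ where $T = (i+1)/K$, and, since $\theta^-$ is a stop-gradient EMA target, treat $v_{\theta^-}$ as a constant parameter in the loss when differentiating with respect to $v_\theta(t,x_t)$. Writing the pointwise integrand at a fixed pair $(x_t, x_{t+\Delta t})$,
\begin{equation*}
\ell = \bigl\| x_t + (T-t)v_\theta(t,x_t) - x_{t+\Delta t} - (T-t-\Delta t)v_{\theta^-}(t+\Delta t,x_{t+\Delta t}) \bigr\|_2^2 + \alpha \bigl\| v_\theta(t,x_t) - v_{\theta^-}(t+\Delta t,x_{t+\Delta t}) \bigr\|_2^2,
\end{equation*}
the first-order optimality condition $\nabla_{v_\theta(t,x_t)} \ell = 0$ yields the linear relation
\begin{equation*}
\bigl[(T-t)^2 + \alpha\bigr] v_\theta^i(t,x_t) = (T-t)\bigl[x_{t+\Delta t} - x_t + (T-t-\Delta t)\,v_{\theta^-}(t+\Delta t, x_{t+\Delta t})\bigr] + \alpha\, v_{\theta^-}(t+\Delta t, x_{t+\Delta t}).
\end{equation*}
At a pointwise minimizer with $\theta^- = \theta$, I can replace $v_{\theta^-}$ by $v_\theta^i$.

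Next I would invoke the oracle property twice, at times $t$ and $t+\Delta t$, to obtain
\begin{equation*}
x_{t+\Delta t} - x_t = (T-t) v^*(t,x_t) - (T-t-\Delta t)\, v^*(t+\Delta t, x_{t+\Delta t}),
\end{equation*}
and substitute this into the optimality equation. This replaces the data displacement by oracle velocities and produces an identity involving only $v_\theta^i$ and $v^*$ at the two times. Subtracting $[(T-t)^2 + \alpha]\,v^*(t,x_t)$ from both sides, and adding/subtracting $[(T-t)(T-t-\Delta t) + \alpha]\,v^*(t+\Delta t, x_{t+\Delta t})$, the cross terms collapse into exactly $\alpha\bigl(v^*(t+\Delta t, x_{t+\Delta t}) - v^*(t,x_t)\bigr)$ and $[(T-t)(T-t-\Delta t) + \alpha]\bigl(v_\theta^i(t+\Delta t, x_{t+\Delta t}) - v^*(t+\Delta t, x_{t+\Delta t})\bigr)$. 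Dividing by $(T-t)^2 + \alpha$ yields the claimed error recursion \cref{error-1}.

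The main obstacle is conceptual rather than computational: one must justify that the stated formula holds at the pointwise minimizer by treating $\theta^-$ as a fixed target (not differentiated) and assuming sufficient model capacity for $v_\theta^i$ to realize the pointwise optimum. A secondary bookkeeping step is to verify that the algebraic identity $-\alpha v^*(t,x_t) - (T-t)(T-t-\Delta t) v^*(t+\Delta t, x_{t+\Delta t}) + [(T-t)(T-t-\Delta t) + \alpha] v^*(t+\Delta t, x_{t+\Delta t})$ simplifies cleanly to $\alpha(v^*(t+\Delta t, x_{t+\Delta t}) - v^*(t,x_t))$, which is what makes the two coefficients in \cref{error-1} sum to match the structure of the identity. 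Beyond this, the argument is a direct quadratic-loss optimality computation combined with the defining property of the oracle consistent velocity.
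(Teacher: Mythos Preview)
Your proposal is correct and follows essentially the same route as the paper: derive the first-order optimality condition for the quadratic loss (treating $\theta^-$ as fixed and then setting $\theta^-=\theta$ at the minimizer), solve for $v_\theta^i(t,x_t)$, and use the oracle relation $x_T = x_t + (T-t)v^*(t,x_t)$ at both $t$ and $t+\Delta t$ to rewrite the displacement and collapse the algebra into the stated error recursion. The only cosmetic difference is ordering: the paper first adds and subtracts $v^*(t+\Delta t,x_{t+\Delta t})$ and then invokes the oracle identity to turn $x_{t+\Delta t}-x_t$ into $x_T-x_t$, whereas you substitute $x_{t+\Delta t}-x_t = (T-t)v^*(t,x_t)-(T-t-\Delta t)v^*(t+\Delta t,x_{t+\Delta t})$ directly; the paper also justifies the pointwise reduction by noting that $x_{t+\Delta t}$ is a deterministic function of $(t,x_t)$, which complements your ``sufficient model capacity'' remark.
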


\textbf{Remark 2.} The mismatch between the learned velocity $v_\theta^i(t,x_t)$ and oracle velocity $v^*(t,x)$ is composed of two parts. The first part is the inconsistency of the oracle $v^*(t+\Delta t, x_{t+\Delta t})-v^*(t, x_{t})$, which is due to the fact that the ground truth velocity $u(t,x_t)$ might not be consistent. If $u(t,x_t)$ is consistent within the time interval $[S,T]$, then the oracle velocity is the ground truth velocity $v^*(t,x) = u(t,x_t)$ and $v^*(t,x)$ is consistent, and thus the error in the first part will vanish. The second part is the accumulated error from prior time step $t+\Delta t$, and by induction, we can deduce that this part will also vanish if the $u(t,x_t)$ is consistent. As a result, Consistency-FM can learn the ground truth velocity with objective \cref{objective-2} on any time interval $[S,T]$ where the ground truth velocity is consistent.

\begin{corollary}\label{theorem 3}
    Consider training consistency-FM on segment $i$ which is defined in time interval $[S,T]$. Assume there exists ground truth velocity field $u_t$ that generates $p_t$ and satisfies the continuity equation \cref{continuity}. If the ground truth velocity $u$ is consistent within $[S,T]$, then Consistency-FM can learn the ground truth velocity, i.e., the learned $v_\theta(t,x_t) = u(t,x_t)$ almost everywhere.
\end{corollary}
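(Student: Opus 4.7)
The plan is to leverage the error formula of Theorem~\ref{theorem 2} after identifying the oracle consistent velocity with the ground truth. Under the hypothesis that $u$ is consistent on $[S,T]$, each integral curve of the ODE is a straight line segment inside the interval, so the endpoint satisfies $x_T = x_t + (T-t)\,u(t,x_t)$. Comparing with the defining relation of $v^{*}$, this forces $v^{*}(t,x_t) = u(t,x_t)$. Substituting into \eqref{error-1}, the first term collapses because $v^{*}(t+\Delta t,x_{t+\Delta t}) - v^{*}(t,x_t) = u(t+\Delta t,x_{t+\Delta t}) - u(t,x_t) = 0$ along every trajectory. Setting $e(t) := v_\theta^{i}(t,x_t) - u(t,x_t)$, the error recursion reduces to
\begin{equation*}
    e(t) \;=\; \frac{(T-t-\Delta t)(T-t)+\alpha}{(T-t)^2+\alpha}\,e(t+\Delta t),
\end{equation*}
with multiplicative coefficient strictly less than $1$.

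The next step is to supply a base case at the right endpoint of the segment. I would do this by a direct substitution of $v_\theta \equiv u$ into the loss \eqref{loss-2} and checking that both squared terms vanish. The velocity-regression term is zero by consistency of $u$; the flow-matching term reduces to
\begin{equation*}
    \bigl\|\bigl(x_t+(T-t)\,u(t,x_t)\bigr) - \bigl(x_{t+\Delta t}+(T-t-\Delta t)\,u(t+\Delta t,x_{t+\Delta t})\bigr)\bigr\|^2,
\end{equation*}
which collapses to zero after substituting the exact Euler relation $x_{t+\Delta t} = x_t + \Delta t \cdot u(t,x_t)$, valid precisely because $u$ is constant along the trajectory inside $[S,T]$. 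Hence $v_\theta \equiv u$ achieves the global minimum of the non-negative loss, and in particular attains $e(T)=0$ at the endpoint.

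Finally, propagating $e(T)=0$ through the recursion above yields $e(t)=0$ for every $t\in[S,T-\Delta t]$ along trajectories sampled from the pre-defined distribution. Since the marginal of this distribution at time $t$ is $p_t$, the identity $v_\theta(t,x_t) = u(t,x_t)$ holds $p_t$-almost everywhere, which is precisely the stated conclusion. The main obstacle I anticipate is the base case: Theorem~\ref{theorem 2} only constrains $t\in[S,T-\Delta t]$ and is silent at $t=T$, so without the independent verification that $v_\theta \equiv u$ achieves zero loss, the backward recursion would be ungrounded. Once that verification is in place, the remainder is a routine backward induction in increments of $\Delta t$, together with a standard support argument to promote equality along trajectories to equality almost everywhere.
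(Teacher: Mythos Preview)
Your overall plan---identify $v^{*}=u$ from the consistency of $u$, collapse the first term of \eqref{error-1}, and run a backward induction on the resulting homogeneous recursion $e(t)=c(t)\,e(t+\Delta t)$---is exactly the paper's argument. The divergence is entirely in how the base case at the right endpoint is obtained.

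As written, your base case is circular. You verify that the specific choice $v_\theta\equiv u$ drives the loss to zero, hence is \emph{a} minimizer, and for that choice $e(T)=0$ holds trivially. But Theorem~\ref{theorem 2} characterizes an \emph{arbitrary} minimizer, so to feed the recursion the boundary value $e(T)=0$ you would need that identity for every minimizer, not just for $u$; the sentence ``in particular attains $e(T)=0$'' silently conflates the candidate $u$ with the learned $v_\theta$. The paper sidesteps this by a different device: since $f^i_\theta(T,x_T)=x_T+0\cdot v^i_\theta(T,x_T)=x_T$ independently of $v^i_\theta(T,\cdot)$, the value $v^i_\theta(T,x_T)$ does not affect the model and may simply be \emph{set} equal to $v^{*}(T,x_T)$, yielding $e(T)=0$ as a boundary condition for the recursion. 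Your zero-loss observation can also be salvaged, but only by pushing it one step further: because the minimum of the loss is $0$, \emph{every} minimizer makes both squared terms vanish almost surely, and at $t=T-\Delta t$ the identity $f_\theta(T-\Delta t,x_{T-\Delta t})=f_\theta(T,x_T)=x_T$ then gives $v_\theta(T-\Delta t,x_{T-\Delta t})=(x_T-x_{T-\Delta t})/\Delta t=u(T-\Delta t,x_{T-\Delta t})$ directly---a legitimate base case $e(T-\Delta t)=0$ for the arbitrary minimizer, which incidentally renders the recursion unnecessary.
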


\subsubsection{Distillation with \method}
\method can also be trained with pre-trained FMs. For distillation from a pre-trained FM $u_\phi(t,x_t)$, the consistency distillation loss for \method is defined as 
\begin{equation}
    \begin{aligned}
        & \mathcal{L}_{\theta,\phi} = E_{t \sim \mathcal{U}}E_{x_{t}}||f_\theta(t,x_t) - f_{\theta^-}(t+\Delta t,\hat{x}_{t+\Delta t}^\phi)||_2^2+\alpha||v_\theta(t,x_t) - v_{\theta^-}(t+\Delta t,\hat{x}_{t+\Delta t}^\phi)||_2^2, \\
        & f_\theta(t,x_t) = x_t + (1-t)*v_\theta(t,x_t), \\
        & \hat{x}_{t+\Delta t}^\phi = x_t + \Delta t *u_{\phi}(t,x_t), \\
    \end{aligned}
\end{equation}
where $\mathcal{U}[0,1-\Delta t]$ is the uniform distribution, $u_{\phi}(t,x)$ is the pre-trained FM, $x_{t}$ follows the distribution from which $u_\phi$ is trained, $\hat{x}_{t+\Delta t}^\phi$ is the one-step prediction using pre-trained model. For distillation from a pre-trained FMs, we set the segment number $K=1$, as evidences show that the flows in pre-trained FMs are relatively straight \citep{liu2022flow,pooladian2023multisample}.

\subsubsection{Sampling with \method}
\method facilitates both one-step and multi-step generation. With a well-trained Consistency FM $v_\theta(\cdot,\cdot)$ ,  we can generate sample by sampling from prior distribution $x_0 = p_0$ and then evaluating the model to transport the data through $k$ segments:
\begin{equation}
    x_{i/k} = x_{(i-1)/k} + 1/k * v_\theta^i((i-1)/k,x_{(i-1)/k}), i=1,2,\dots k-1
\end{equation}
Alternatively, iterative sampling can be employed with the standard Euler method within each segment \citep{butcher2016numerical}:
\begin{equation}
    x_{t+\Delta t} = x_{t} + \Delta t * v_\theta^i(t,x_{t}), t \in [i/k,(i+1)/k-\Delta t]
\end{equation}
This approach offers a versatile framework that facilitates a balanced trade-off between sample quality and sampling efficiency.

\section{Experiments}

\paragraph{Experimental Settings}
We evaluate our \method on unconditional image generation tasks for preliminary experiments. Specifically, we use the CIFAR-10 \citep{alex2009learning} and two high-resolution (256x256) datasets CelebA-HQ \citep{karras2017progressive} and AFHQ-Cat \citep{choi2020stargan}. 
To make a fair comparison with previous methods, we evaluate the sample procedure with NFE = \{2, 6, 8\}, adopt the U-Net architecture of DDPM++ \citep{song2020score} as Rectified Flow \citep{liu2022flow} and use Frechet inception distance (FID) score \citep{heusel2017gans} to measure the quality of generated image samples. All models are initialized randomly, and more experimental settings can be found in \cref{tab:exp_detail1}.

\begin{table}[ht]
    \caption{Experimental details for training \method.}\label{tab:exp_detail1}
    \vspace{0.06in}
    \centering
    {\setlength{\extrarowheight}{1.5pt}
    \begin{adjustbox}{max width=\linewidth}
    \begin{tabular}{l|ccc}
        \Xhline{3\arrayrulewidth}
        Training Details & CIFAR-10 & AFHQ-Cat 256 × 256 & CelebA-HQ 256 × 256 \\
        \Xhline{3\arrayrulewidth}
        Training iterations & 180k & 250k & 250k \\
        Batch size & 512 & 64 & 64 \\
        Optimizer & Adam & Adam & Adam \\
        Learning rate & 2e-4 & 2e-4 & 2e-4 \\
        EMA decay rate & 0.999999 & 0.999 & 0.999 \\
        Dropout probability & 0.0 & 0.0 & 0.0 \\
        ODE solver & Euler & Euler & Euler \\
        \Xhline{3\arrayrulewidth}
    \end{tabular}
    \end{adjustbox}
    }
\end{table}
\paragraph{Baseline Methods} To demonstrate the effectiveness of our \method, we follow previous work \citep{song2023consistency} and compare \method with some representative diffusion models and flow models, such as Consistency Model \citep{song2023consistency} and Rectified Flow \citep{liu2022flow}. In the experiments on  AFHQ-Cat and CelebA-HQ datasets, we also add recent Bellman Sampling \citep{nguyen2024bellman} for flow matching models as the baseline. 

\subsection{\method Beats Rectified Flow and Consistency Model}
As demonstrated in \cref{tab:results}, on CIFAR-10 dataset, \method’s NFE 2 FID (5.34) not only surpasses representative efficient generative models like Consistency Model (5.83) and Rectified Flow (378) on unconditional generation, but also is comparable to those mainstream diffusion models with NFE > 30. Notably, in \cref{fig:efficiency_comparison}, our \method significantly advances training efficiency, \textbf{converging 4.4 times faster than consistency model and 1.7 times faster than rectified flow} while achieving superior sampling quality. These evaluation results sufficiently show that our \method provides a more efficient way to model data distribution, proving the efficacy of our proposed learning paradigm of velocity consistency for FM models.
\begin{table}[ht]
\vspace{-0.03in}
	\caption{Comparing \method with previous diffusion models and flow models on CIFAR-10.}\label{tab:results}
	\centering
        \vspace{0.06in}
	\begin{tabular}{lccc}
        \Xhline{3\arrayrulewidth}
	    Method & NFE ($\downarrow$) & FID ($\downarrow$) & IS ($\uparrow$) \\
        \\[-2ex]
        \Xhline{3\arrayrulewidth}
        Score SDE \cite{song2020score} & 2000 & 2.20 & \textbf{9.89}\\
        DDPM \cite{ho2020denoising} & 1000 & 3.17 & 9.46\\
        LSGM \cite{vahdat2021score} & 147 & 2.10 & -\\
        PFGM \cite{xu2022poisson} & 110 & 2.35 & 9.68\\
        EDM \cite{karras2022edm}
         & 35 & \textbf{2.04} & 9.84 \\
         \hline
        1-Rectified Flow \cite{liu2022flow}
         & 1 & 378 & 1.13\\
        Glow \cite{kingma2018glow}
         & 1 & 48.9 & 3.92 \\
        Residual Flow \cite{chen2019residual} & 1 & 46.4&-\\
        GLFlow \cite{xiao2019generative} & 1 & 44.6 & -\\
        DenseFlow \cite{grcic2021densely} & 1 & 34.9 & -\\
        Consistency Model  \citep{song2023consistency} & 2 & 5.83 & \textbf{8.85} \\
        \hline
        \textbf{Consistency Flow Matching}&2&\textbf{5.34} &\textbf{8.70}\\
        \Xhline{3\arrayrulewidth}
	\end{tabular}
\vspace{-0.1in}
\end{table}

\subsection{High-Resolution Image Generation}
\cref{tab:results2} shows the quantitative result of FM models and our \method on high-resolution (256 × 256) image generation, including AFHQ-Cat and CelebA-HQ. We can observe that our \method also outperform existing SOTA FM methods like rectified flow and rectified flow + Bellman sampling \citep{nguyen2024bellman} by a significant margin with same NFEs. Furthermore, compared to CIFAR-10, \method shows a greater improvement in generating high-resolution images. This phenomenon demonstrates that our \method can potentially learn straighter flows for modeling more complex data distribution, enabling faster and better sampling. 
\begin{table}[ht]
	\caption{Comparing \method with previous flow matching models.}\label{tab:results2}
 \vspace{0.06in}
	\centering
	{\setlength{\extrarowheight}{1.5pt}
	\begin{adjustbox}{max width=\linewidth}
	\begin{tabular}{l|cccccc}
	    \Xhline{3\arrayrulewidth}
        Method&\multicolumn{2}{l}{\textbf{AFHQ-Cat $\bm{256\times 256}$}}&\multicolumn{2}{l}{\textbf{CelebA-HQ $\bm{256\times 256}$}}\\
         & NFE ($\downarrow$) & FID ($\downarrow$)& NFE ($\downarrow$) & FID ($\downarrow$) \\
        \Xhline{3\arrayrulewidth}
        Rectified Flow \citep{liu2022flow} & 8 &57.0&8&109.4\\
        Rectified Flow + Bellman Sampling \citep{nguyen2024bellman} & 8 &33.9&8&49.8\\
        Rectified Flow \citep{liu2022flow} & 6 &61.5&6&127.0\\
        Rectified Flow + Bellman Sampling \citep{nguyen2024bellman} & 6 &36.2&6&72.5\\
        \hline
        \textbf{Consistency Flow Matching}&6&\textbf{22.5}&6&\textbf{36.4}\\
        \Xhline{3\arrayrulewidth}
	\end{tabular}
    \end{adjustbox}
    }
    \vspace{-0.2in}
\end{table}

\subsection{Qualitative Analysis}
We provide three convergence processes of training our \method in \cref{fig:convergence}. We observe that \method converges faster on CIFAR-10 than on AFHQ-Cat and CelebA-HQ because the latter two high-resolution datasets are more complex to model their data distributions. Overall, \method consistently converges fast, proving the efficacy of defining straight flows for generative modeling. Additionally, we qualitatively compare our method with rectified flow in \cref{fig:qualitative-cifar}. From the generation results, we can observe that our \method is capable of generating more realistic images than rectified flow with the same NFEs, revealing our \method models data distribution more effectively.  
\begin{figure}[ht]
\vskip -0.1in
	\centering
	\begin{subfigure}{0.31\linewidth}
		\centering
		\includegraphics[width=0.98\linewidth]{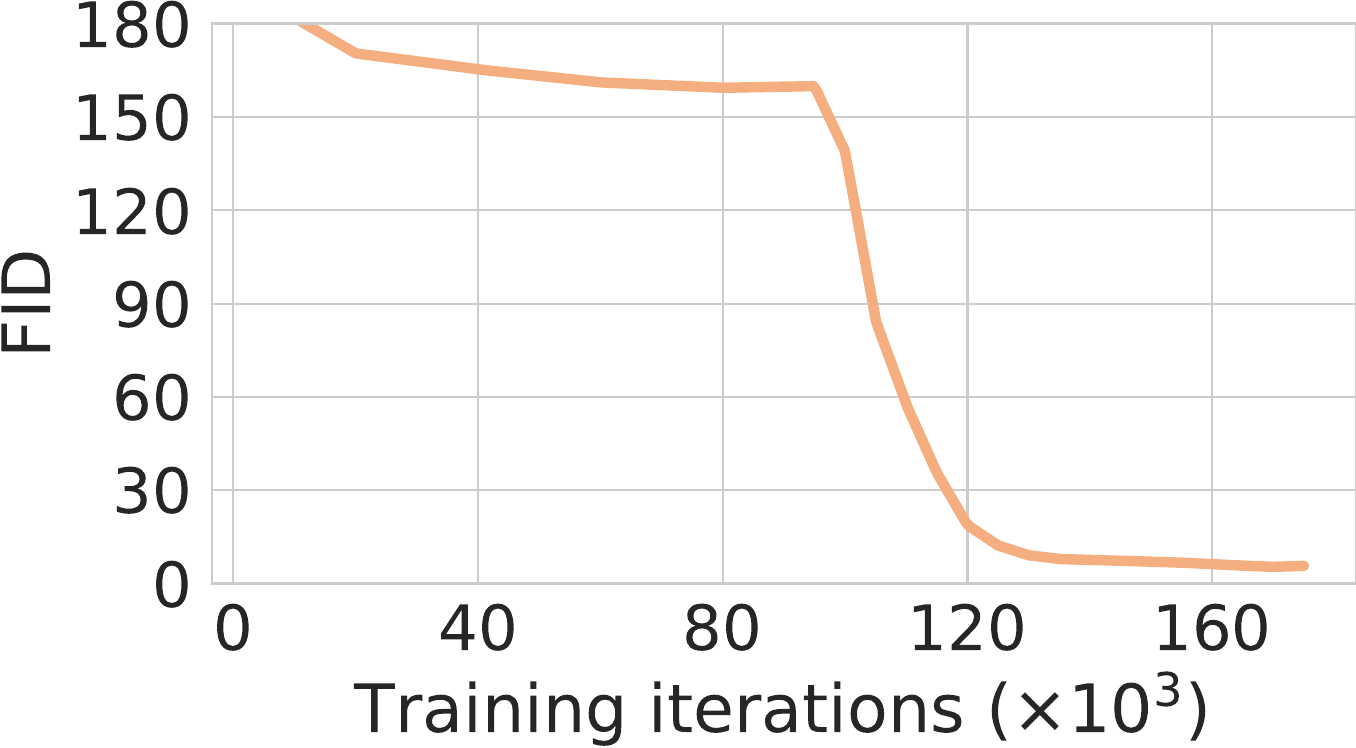}
		\caption{CIFAR-10, NFE 2}
	\end{subfigure}
	\begin{subfigure}{0.31\linewidth}
		\centering
		\includegraphics[width=0.98\linewidth]{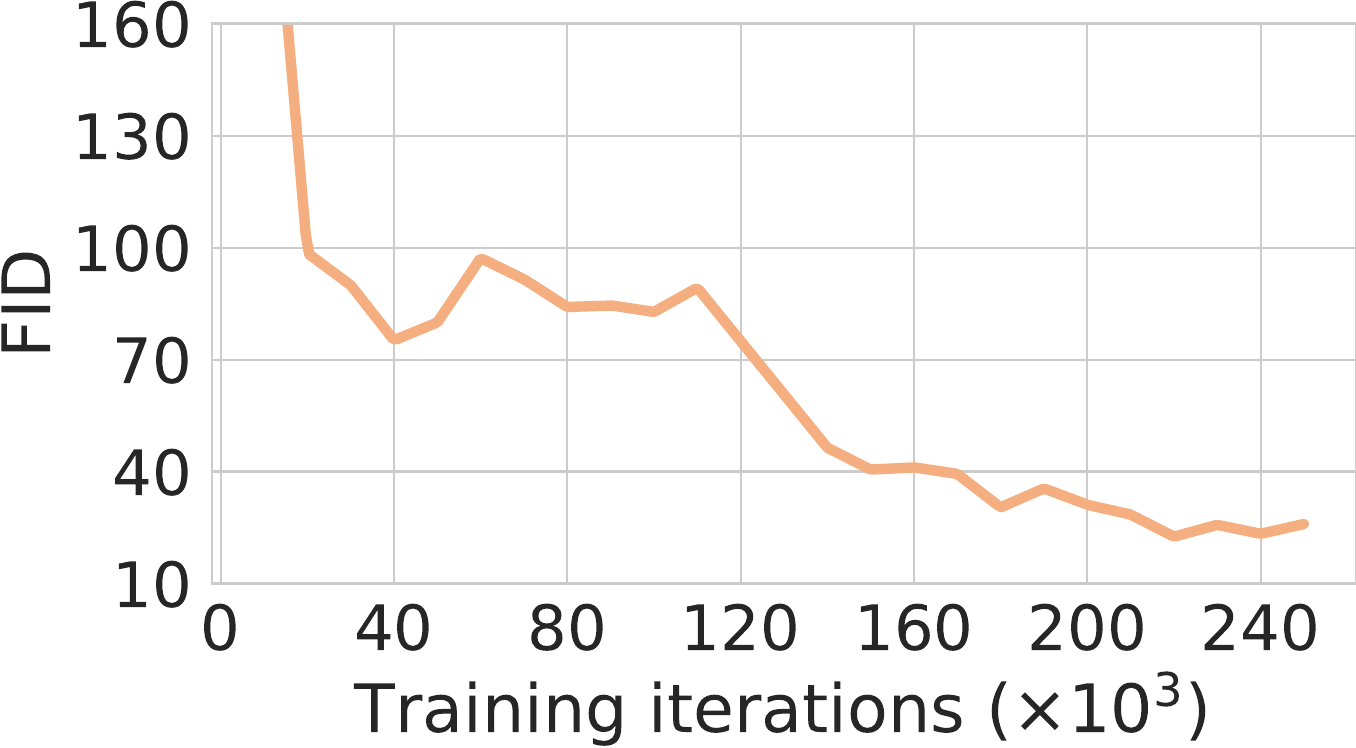}
		\caption{AFHQ-Cat, NFE 6}
	\end{subfigure}	
 	\begin{subfigure}{0.31\linewidth}
		\centering
		\includegraphics[width=0.98\linewidth]{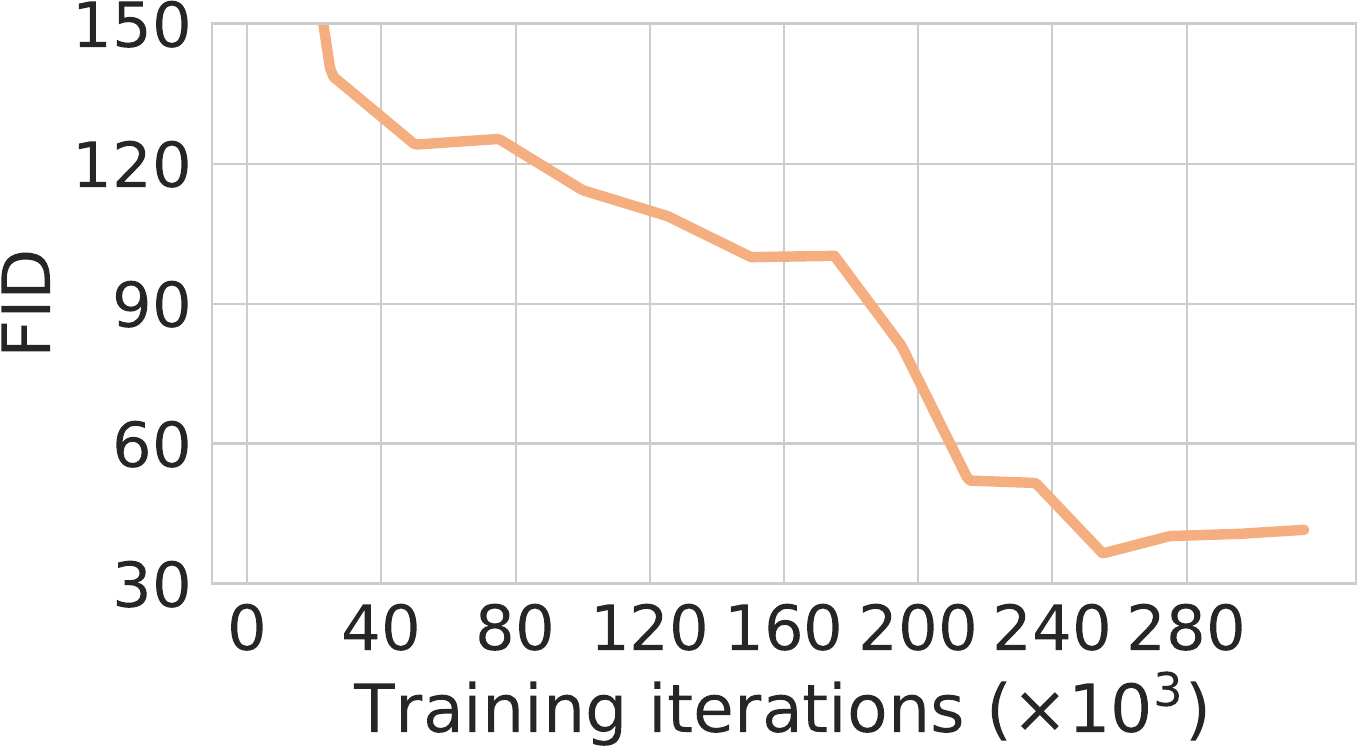}
		\caption{CelebA-HQ, NFE 6}
	\end{subfigure}
	\caption{Demonstration of training convergence on three datasets.}
    \label{fig:convergence}
\end{figure}

\begin{figure}[ht]
	\centering
	\includegraphics[width=0.9\linewidth]{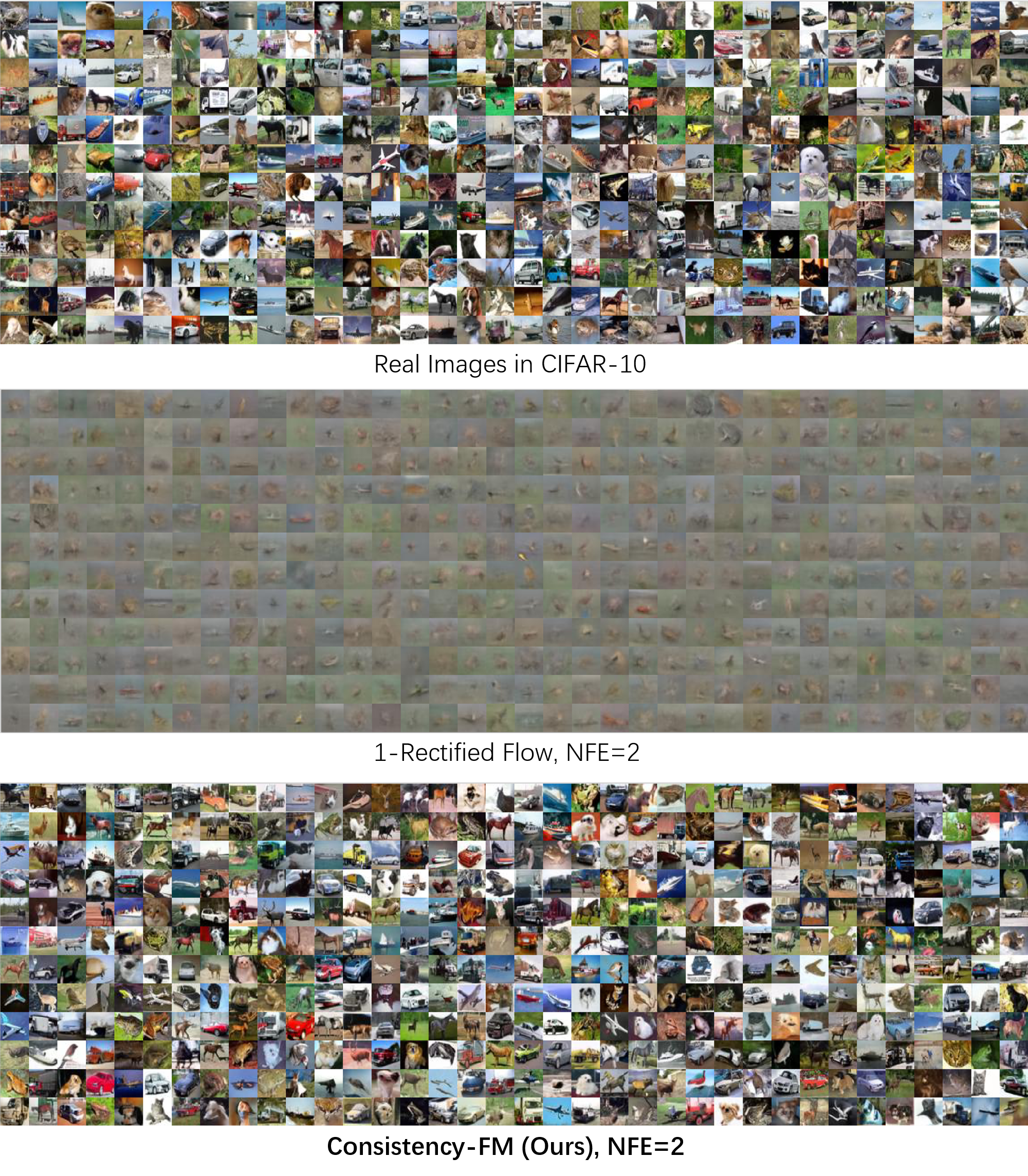}
	\caption{Sampling comparison between Rectified Flow \citep{liu2022flow} and our \method.}
	\label{fig:qualitative-cifar}
\end{figure}

\section{Future Work}
This work theoretically and empirically presents our new fundamental flow matching model. While we achieve some improvements in generated image quality and training efficiency. We here discuss a few directions for advancing this research:
\begin{itemize}
    \item \textbf{Text-to-image generation:} While we demonstrates superior performance and efficiency on unconditional image generation, extending to large-scale datasets and more complex generation scenarios (e.g. text-to-image generation \citep{rombach2022high,podell2023sdxl,yang2024mastering}) is necessary to further improve the capacity of our proposed velocity consistency. 
    \item \textbf{Distillation with pretrained models:} In method part, we provide a loss function to distill pretrained FMs with our \method. From a more general perspective, it is worth to explore whether our \method is able to distill both pretrained DMs and FMs.
\end{itemize}

{\small
\bibliographystyle{ieeetr}
\bibliography{neurips_2024}
}

\newpage 
\appendix

\section{Theoretical Supports and Proofs}

\subsection{PROOF OF LEMMA \ref{lemma1}}\label{pf_lemma1}
\begin{proof}[Proof of Lemma \ref{lemma1}] 
If \textit{Condition 1} is meet, then ODE \cref{ODE-1} associated with $v$ becomes
\begin{equation*}
    \frac{d \gamma_x(t)}{dt} = v(t,\gamma_x(t)) = v(0,x),
\end{equation*}
and the solution of which is $\gamma_x(t) = x + t*v(0,x)$. Specifically, we have
\begin{equation*}
    \begin{aligned}
            \gamma_x(1) &= x + 1*v(0,x) \\
            &= \gamma_x(t) + (1-t)*v(0,\gamma_x(0))= \gamma_x(t) + (1-t)*v(t,\gamma_x(t))\\
            &= \gamma_x(s) + (1-s)*v(0,\gamma_x(0)) =\gamma_x(s) + (1-s)*v(t,\gamma_x(s))
    \end{aligned}
\end{equation*}
and thus \textit{Condition 2} is meet.

On the other hand, if \textit{Condition 2} is meet, then we have
\begin{equation*}
    \begin{aligned}
            \gamma_x(t) - \gamma_x(s) &= (1-s)v(s,\gamma_x(s))-(1-t)v(t,\gamma_x(t)) \\
            &= \int_{s}^{t} v(u,\gamma_x(u))du\\
    \end{aligned}
\end{equation*}
Divide both hands in the above equation with $t-s$ and let $t$ approaches $s$, we have:

\begin{equation}
    \begin{aligned}
        & v(s,\gamma_x(s))= \lim_{t\to s}\frac{\int_{s}^{t} v(u,\gamma_x(u))du}{t-s},\\
        & = \lim_{t\to s} \frac{(1-s)v(s,\gamma_x(s))-(1-t)v(t,\gamma_x(t))}{t-s}\\
        &=\lim_{t\to s} v(t,\gamma_x(t)) + (1-s) * \frac{v(s,\gamma_x(s))-v(t,\gamma_x(t))}{t-s} = v(s,\gamma_x(s)) - (1-s) \frac{d v(s,\gamma_s)}{ds}
    \end{aligned}
\end{equation}

Comparing the both sides in the above equation, we have $\frac{d v(s,\gamma_s)}{ds}=0$, and thus $\textit{Condition 1}$ is meet.
\end{proof}

\subsection{PROOF OF LEMMA \ref{lemma2} } \label{pr_lemma2}
We provide an another lemma which describes the consistency constraint as partial differential equations and supports the connection bewteen Consistency-FM with FMs.
\begin{lemma}\label{lemma2}
    Assume $v$ is continuously differentiable and consistent, then $v$ satisfies the following equation:
    \begin{equation}
        \partial_t v(t,x) + v \cdot \nabla_x v = 0
    \end{equation}
\end{lemma}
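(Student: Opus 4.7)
\medskip

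\noindent\textbf{Proof proposal for Lemma \ref{lemma2}.}
The plan is to differentiate the scalar/vector-valued function $t \mapsto v(t, \gamma_x(t))$ along a flow and exploit consistency (\textit{Condition 1} of Lemma \ref{lemma1}) to set the derivative to zero. Since $v$ is continuously differentiable, the chain rule applies, and since $\gamma_x$ satisfies the defining ODE \cref{ODE-1}, the time derivative of $\gamma_x$ is exactly $v$ evaluated along the flow. This reduces the statement to a pointwise identity.

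More concretely, first I would write
\begin{equation*}
    \frac{d}{dt}\,v(t,\gamma_x(t)) \;=\; \partial_t v(t,\gamma_x(t)) \;+\; \nabla_x v(t,\gamma_x(t)) \cdot \frac{d\gamma_x(t)}{dt}
    \;=\; \partial_t v(t,\gamma_x(t)) \;+\; v(t,\gamma_x(t)) \cdot \nabla_x v(t,\gamma_x(t)),
\end{equation*}
using the $C^1$ assumption on $v$ and the definition of $\gamma_x$. Next I would invoke consistency: by hypothesis $v(t,\gamma_x(t)) = v(0,x)$ is constant in $t$, so the left-hand side vanishes for all $t$ and all initial conditions $x$. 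This yields
\begin{equation*}
    \partial_t v(t,\gamma_x(t)) + v(t,\gamma_x(t)) \cdot \nabla_x v(t,\gamma_x(t)) = 0
\end{equation*}
along every integral curve of the ODE.

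Finally, I would upgrade this identity from ``along flows'' to ``pointwise.'' For any $(t_0, x_0) \in [0,1]\times \mathcal{R}^d$, we can solve \cref{ODE-1} with initial condition $\gamma(t_0) = x_0$ (the Lipschitz hypothesis on $v$ from Lemma \ref{lemma1}, which is implicit in the setting, guarantees local existence and uniqueness). Evaluating the identity above at this flow at time $t = t_0$ gives $\partial_t v(t_0, x_0) + v(t_0,x_0)\cdot \nabla_x v(t_0,x_0) = 0$. Since $(t_0,x_0)$ was arbitrary, the PDE holds everywhere, completing the proof.

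The main obstacle is essentially bookkeeping: ensuring that we have enough regularity to apply the chain rule and that every point of the domain is covered by some integral curve. Both are mild: $C^1$ regularity of $v$ handles the former, and the standard ODE existence theorem (under the same Lipschitz assumption used in Lemma \ref{lemma1}) handles the latter, so no serious technical difficulty arises.
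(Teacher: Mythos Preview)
Your proposal is correct and follows essentially the same approach as the paper: both differentiate $v$ along the flow trajectory, apply the chain rule to obtain $\partial_t v + v\cdot\nabla_x v$, and use consistency to conclude this vanishes. Your version is slightly more careful in explicitly upgrading the identity from ``along integral curves'' to ``pointwise,'' a step the paper leaves implicit.
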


\begin{proof}[Proof of Lemma \ref{lemma2}]
    By Lemma \ref{lemma1}, if $v$ is consistent then it can be written as:
    \begin{equation}
        v(t+\Delta t, x_t+\Delta t*v(t,x_t)) = v(t,x_t)
    \end{equation}
    Since $v$ is differentiable, we can take derivatives with respect to $t$:
    \begin{equation}
        \frac{d v}{dt} = \partial_t v + v \cdot \nabla_x v
    \end{equation}
    Then by definition, if $v$ is consistent we have
    \begin{equation}
        \frac{d v}{dt} = \partial_t v + v \cdot \nabla_x v = 0
    \end{equation} 
\end{proof}
\subsection{PROOF OF THEOREM \ref{theorem-1}} \label{pf_theorem1}
\begin{proof}[Proof of Theorem \ref{theorem-1}] By the first mean value theorem, there exist a $t^{'} \in [t,t+\Delta t]$, such that 
\begin{equation}
    x_{t+\Delta t}- x_t = \int_{t}^{t+\Delta t} u(s,x_s)du = \Delta t * u(t^{'}, x_{t^{'}})
\end{equation}
Then by the definition of $f_\theta(t,x_t) = x_t +(1-t)*v_\theta(t,x_t)$, we have:
    \begin{equation}
    \begin{aligned}
    &f_\theta(t,x_{t}) - f_{\theta}(t+\Delta t,x_{t+\Delta t}) \\
    & =x_t - x_{t+\Delta t} + (1-t)*v_\theta(t,x_t)-(1-t-\Delta t)*v_\theta(t+\Delta t,x_{t+\Delta t}), \\
    & = \Delta t *v_\theta(t+\Delta t,x_{t+\Delta t})-\int_{t}^{t+\Delta t}u(s,x_s)ds + (1-t)*(v_\theta(t,x_t)-v_\theta(t+\Delta t,x_{t+\Delta t})),\\
    & =^1 \Delta t *(v_\theta(t+\Delta t,x_{t+\Delta t})-u(t^{'},x_{t^{'}})) + (1-t)*(v_\theta(t,x_t)-v_\theta(t+\Delta t,x_{t+\Delta t})),\\
    & =^2 \Delta t * [((v_\theta(t+\Delta t,x_{t+\Delta t})-u(t',x_{t'})) - (1-t)(\partial_t v_\theta(t,x_t) + u(t',x_t') \cdot \nabla_x v_\theta(t,x_t))]+ O((\Delta t)^2)\\
    & =^3 \Delta t * [(v_\theta(t,x_{t})-u(t,x_{t})) - (1-t)(\partial_t v_\theta(t,x_t) + u(t,x_t) \cdot \nabla_x v_\theta(t,x_t))]+ O((\Delta t)^2), 
    \end{aligned}
    \end{equation}
    where in $(1)$ we used the first mean value theorem, in $(2)$ we used first-order Taylor approximation and in $(3)$ we used the boundedness of $v_\theta$, derivatives of $v_\theta$ and $u$. Then the objective can be written as:

    \begin{equation*}
    \begin{aligned}
        &E|| f_\theta(t,x_{t}) - f_{\theta}(t+\Delta t,x_{t+\Delta t})||_2^2,\\
        &= E||\Delta t *(v_\theta(t,x_{t})-u(t,x_{t}) - (1-t)(\partial_t v_\theta + u \cdot \nabla_x v_\theta))+ O((\Delta t)^2)||_2^2 \\
        & =(\Delta t)^2 * E||v_\theta(t,x_t)-u(t,x_t) - (1-t)(\partial_t v_\theta + u \cdot \nabla_x v_\theta)||_2^2 + o((\Delta t)^2)
    \end{aligned}
    \end{equation*}
    
\end{proof}

\subsection{PROOF OF THEOREM \ref{theorem 2}} \label{pf_theorem2}
    \begin{proof}[Proof of Theorem \ref{theorem 2}]
    As $v_\theta^i(t,x_t)$ is the minimizer of \cref{loss-2} with respect to segment $i$, it must satisfies the first-order condition: 
    \begin{equation}\label{first-order}
        \begin{aligned}
            0 &= \partial_\theta (E|| f_\theta^i(t,x_t) - f_{\theta^-}^i(t+\Delta t,x_{t+\Delta t})||_2^2+ \alpha|| v_\theta^i(t,x_t) - v_{\theta^-}^i(t+\Delta t,x_{t+\Delta t})||_2^2)\\
            &= E ((T-t)(f_\theta^i(t,x_t) - f_{\theta^-}^i(t+\Delta t,x_{t+\Delta t})) + \alpha(v_\theta^i(t,x_t) - v_{\theta^-}^i(t+\Delta t,x_{t+\Delta t}))) \cdot \partial_\theta v_\theta^i(t,x_t),
        \end{aligned}
    \end{equation}
    where $f_\theta^i(t,x_t) = x_t +(T-t)v_\theta^i(t,x_t)$
    Note that in our assumption, $x_{t+\Delta t}$ is generated by an ODE and thus is a deterministic function of $(t,x_t)$, then the non-trivial solution to \ref{first-order} satisfies the following equation almost everywhere:
    \begin{equation}\label{first-order1}
        0 = (T-t)(f_\theta^i(t,x_t) - f_{\theta^-}^i(t+\Delta t,x_{t+\Delta t})) + \alpha(v_\theta^i(t,x_t) - v_{\theta^-}^i(t+\Delta t,x_{t+\Delta t})),
    \end{equation}
    As the gradient at $\theta$ is zero, then $\theta = \theta^-$, thus the learned velocity can be derived from \ref{first-order1}:
    \begin{equation}
        v_\theta^i(t,x_t) = \frac{(T-t)(x_{t+\Delta t}-x_t)}{(T-t)^2+\alpha} +\frac{(T-t-\Delta t)(T-t)+\alpha}{(T-t)^2+\alpha}v_{\theta}^i(t+\Delta t,x_{t+\Delta t})
    \end{equation}
    Furthermore, we have:
    \begin{equation}
    \begin{aligned}
        &v_\theta^i(t,x_t) \\
        &= \frac{(T-t)(x_{t+\Delta t}-x_t)}{(T-t)^2+\alpha} +\frac{(T-t-\Delta t)(T-t)+\alpha}{(T-t)^2+\alpha}v^*(t+\Delta t,x_{t+\Delta t})\\
        & \quad + \frac{(T-t-\Delta t)(T-t)+\alpha}{(T-t)^2+\alpha}v_{\theta}^i(t+\Delta t,x_{t+\Delta t}) - \frac{(T-t-\Delta t)(T-t)+\alpha}{(T-t)^2+\alpha}v^*(t+\Delta t,x_{t+\Delta t})\\
        &= \frac{(T-t)((T-(t+\Delta t))v^*(t+\Delta t,x_{t+\Delta t})+x_{t+\Delta t}-x_t)}{(T-t)^2+\alpha}+\frac{\alpha v^*(t+\Delta t,x_{t+\Delta t})}{(T-t)^2+\alpha}\\
        & \quad + \frac{(T-t-\Delta t)(T-t)+\alpha}{(T-t)^2+\alpha}(v_{\theta}^i(t+\Delta t,x_{t+\Delta t})-v^*(t+\Delta t,x_{t+\Delta t}))\\
        &=^1  \frac{(T-t)(x_T-x_t)}{(T-t)^2+\alpha} +\frac{\alpha v^*(t+\Delta t,x_{t+\Delta t})}{(T-t)^2+\alpha}\\
        & \quad + \frac{(T-t-\Delta t)(T-t)+\alpha}{(T-t)^2+\alpha}(v_{\theta}^i(t+\Delta t,x_{t+\Delta t})-v^*(t+\Delta t,x_{t+\Delta t}))\\
        &=^2  \frac{(T-t)^2v^*(t,x_{t})}{(T-t)^2+\alpha} +\frac{\alpha v^*(t+\Delta t,x_{t+\Delta t})}{(T-t)^2+\alpha}\\
        & \quad + \frac{(T-t-\Delta t)(T-t)+\alpha}{(T-t)^2+\alpha}(v_{\theta}^i(t+\Delta t,x_{t+\Delta t})-v^*(t+\Delta t,x_{t+\Delta t}))\\
        &= v^*(t,x_{t}) + \frac{\alpha(v^*(t+\Delta t,x_{t+\Delta t}) - v^*(t,x_{t}))}{(T-t)^2+\alpha}\\
        & \quad + \frac{(T-t-\Delta t)(T-t)+\alpha}{(T-t)^2+\alpha}(v_{\theta}^i(t+\Delta t,x_{t+\Delta t})-v^*(t+\Delta t,x_{t+\Delta t}))\\
    \end{aligned}
    \end{equation}
    where in (1) and (2) we have use the assumption of oracle $v^*$ that $x_T = x_t + (T-t)v^*(t,x_t)$
    \end{proof}

\subsection{PROOF OF COROLLARY \ref{theorem 3}} \label{pf_corollary1}
\begin{proof}[Proof for Corollary \ref{theorem 3}]
    Note that $x_T = x_T + 0*v_\theta^i(T,x_{T})$, and thus we can set arbitrary value for $v_\theta^i(T,x_{T})$ without affecting the model. Specifically, we set $v_\theta^i(T,x_{T}) = v^*(T,x_{T})$. Then by Theorem \ref{theorem 2}, the error at $T-\Delta t$ is:
    \begin{equation}
        \begin{aligned}
            &v_\theta^i(T-\Delta t,x_{T-\Delta t}) - v^*(T-\Delta t,x_{T-\Delta t}) \\
            & = \frac{\alpha}{(\Delta t)^2+\alpha}(v^*(T, x_{T})-v^*(T-\Delta t, x_{T-\Delta t})) \\
            & \quad + \frac{(\Delta t-\Delta t)(T-t) +\alpha}{(\Delta t)^2+\alpha}(v_\theta^i(T,x_{T}) - v^*(T,x_{T}))\\
            &=\frac{\alpha}{(\Delta t)^2+\alpha}(v^*(T, x_{T})-v^*(T-\Delta t, x_{T-\Delta t}))
        \end{aligned}
    \end{equation}
     Furthermore, as $u$ is consistent within $[S,T]$, by Lemma \ref{lemma1} we have :
     \begin{equation*}
        \begin{aligned}
            &x_T = x_t + (T-t) * u(t,x_t),\\
            & \Rightarrow v^*(t,x_t) = u(t,x_t) \equiv u(T,x_T),\\
            &\Rightarrow v^*(t,x_t) = v^*(t+\Delta t,x_{t+\Delta t}), \forall t \in [S,T-\Delta t].
        \end{aligned}
     \end{equation*} And thus $v_\theta^i(T-\Delta t,x_{T-\Delta t}) - v^*(T-\Delta t,x_{T-\Delta t}) = 0$. 
     
     By Theorem \ref{theorem 2}, we can deduce by induction that 
     \begin{equation*}
        \begin{aligned}
            &v_\theta^i(t+\Delta t,x_{t+\Delta t}) = v^*(t+\Delta t,x_{t+\Delta t})\quad \& \quad v^*(t,x_t) = v^*(t+\Delta t,x_{t+\Delta t})\\
            &\Rightarrow v_\theta^i(t,x_t) = v^*(t,x_t) = u(t,x_t), \forall t \in [S,T-\Delta t].
        \end{aligned}
     \end{equation*}
\end{proof}

\end{document}